 \newtheorem{theorem}{Theorem}
 \newtheorem{Proposition}[theorem]{Proposition}
\DeclareMathOperator*{\argmin}{argmin}
\DeclareMathOperator*{\argmax}{argmax} 
\DeclareMathOperator*{\E}{\mathbb{E}}
\newcommand{\reals}{\mathbb{R}}
\newcommand{\simplex}[1]{\bigtriangleup(#1)}
\newcommand{\U}{\mathcal{U}}
\newcommand{\F}{\mathcal{F}}
\newcommand{\G}{\mathcal{G}}
\newcommand{\M}{\mathcal{M}}
\renewcommand{\L}{\mathcal{L}}
\newcommand{\eqn}[1]{\begin{equation} #1 \end{equation}}
\newcommand{\eq}[1]{\begin{equation*} #1 \end{equation*}}
\newcommand{\eqarrn}[1]{\begin{eqnarray} #1 \end{eqnarray}}
\newcommand{\figref}[1]{{Figure}~\ref{#1}}
\def\thmAMRSA{1}
\def\thmRDRSA{2}
\newcommand{\Fa}{\F_{\alpha}}
\newcommand{\Ga}{\G_{\alpha}}
\newcommand{\Ep}[1]{\mathbb{E}_{#1}}
\newcommand{\dpp}[2][]{\frac{\partial #1}{\partial #2}}
\newcommand{\Linf}{L^*}
\newcommand{\Sinf}{S^*}
\newcommand{\La}{L^*_{\alpha}}
\newcommand{\Sa}{S^*_{\alpha}}
\newcommand{\Qa}{Q_{\alpha}}
\newcommand{\Za}{Z_{\alpha}}
\renewcommand{\S}{\mathcal{S}}
\title{ \bf  A Rate--Distortion view of human pragmatic reasoning}
\author{
  {\bf Noga Zaslavsky}$^{1,2}$
  \and
  {\bf Jennifer Hu}$^1$
  \and
  {\bf Roger P. Levy}$^1$
}
\date{  $^1$Department of Brain and Cognitive Sciences, $^2$Center for Brains Minds and Machines\\
  Massachusetts Institute of Technology\\
  \texttt{\{nogazs, jennhu, rplevy\}@mit.edu}}
\begin{document}

\maketitle

\begin{quote}
\small
\textbf{\normalsize Abstract.~}~What computational principles underlie human pragmatic reasoning? A prominent approach to pragmatics is the Rational Speech Act (RSA) framework, which formulates pragmatic reasoning as probabilistic speakers and listeners recursively reasoning about each other.
While RSA enjoys broad empirical support, it is not yet clear whether the dynamics of such recursive reasoning may be governed by a general optimization principle.
Here, we present a novel analysis of the RSA framework that addresses this question. First, we show that RSA recursion implements an alternating maximization for optimizing a tradeoff between expected utility and communicative effort. On that basis, we study the dynamics of RSA recursion and disconfirm the conjecture that expected utility is guaranteed to improve with recursion depth. Second, we show that RSA can be grounded in Rate--Distortion theory, while maintaining a similar ability to account for human behavior and avoiding a bias of RSA toward random utterance production. This work furthers the mathematical understanding of RSA models, and suggests that general information-theoretic principles may give rise to human pragmatic reasoning.
\end{quote}

\section{Introduction}

The ability to reason about meaning in the local context of social interactions is a fundamental aspect of human language. For example, pragmatic reasoning is key to inferring non-literal meanings of utterances, efficient use of linguistic ambiguity~\cite{peloquin-etal:2019}, and driving language evolution and change~\cite{Sperber2010,Traugott2012}. Thus, understanding the computational principles that may give rise to pragmatic reasoning is important for studying the forces that shape language and cognition. 

A prominent computational approach to pragmatics is the Rational Speech Act (RSA) framework \cite{Frank2012,Goodman2016}. RSA formulates pragmatic reasoning as probabilistic speakers and listeners recursively reasoning about each other's state of mind with the goal of cooperatively gaining communicative utility.
This framework enjoys broad empirical support across a number of psycholinguistic phenomena, such as scalar implicature, irony, metaphor, and hyperbole (for review:~\citeNP{Goodman2016}). 
In many cases, RSA's recursive reasoning is assumed to terminate after one or two iterations, although
several studies have explored and motivated deeper recursions~(e.g., ~\citeNP{Camerer2004,Franke2016,bergen-levy-goodman:2016-sp,Levy2018,Frank2018}).
However, much remains unknown about the dynamics of RSA recursion and whether it can be characterized by a well-motivated optimization principle. It has been conjectured that RSA dynamics is guaranteed to increase expected utility (e.g.,~\citeNP{yuan2018,peloquin-etal:2019}), but these explorations have relied on numeric simulations, leaving open key questions about the dynamics of RSA models.

In this work we present a set of analytic results, demonstrated by model simulations, that extend the theoretical understanding of the RSA framework and ground it in Rate--Distortion (RD) theory~\cite{Shannon1948} --- the subfield of information theory that characterizes efficient compression under limited resources, and has recently been applied to other aspects of language~\cite{Zaslavsky2018,Hahn2020} and cognition~\cite{Sims2016,Sims2018,Gershman2020}.
Specifically, our main contributions are:
(1) We show that RSA recursion is an instance of the alternating maximization algorithm~\cite{Csiszar2004}. However, this optimization does not maximize expected utility as previously conjectured, but rather a tradeoff between expected utility and communicative effort.
(2) We show that the RSA tradeoff can be generalized to a type of RD tradeoff, which yields a slightly modified model of pragmatic reasoning. We refer to this model as RD-RSA.
(3) Building on these results, we study the dynamics of RSA and RD-RSA and compare their predictions. We find that RD-RSA is similar to RSA in its ability to account for human behavior, while avoiding a bias of RSA toward non-informative random utterance production.
Taken together, these results suggest that human pragmatic reasoning may be understood in terms of RD theory.

\subsection*{The Rational Speech Act framework}

We begin by reviewing the formal setup of the RSA framework, which forms the basis for our analysis. The RSA framework provides a class of models of pragmatic reasoning, based on a reference game involving a speaker and a listener.
The game is defined by a set of meanings (referents) $\M$, a set of utterances $\U$, and a lexicon $l(m,u)$ that determines the literal meanings of each utterance (see \figref{fig:lexicon} for example). Given a meaning $m\in\M$ drawn from a prior distribution $P(m)$, the speaker communicates $m$ to the listener by producing an utterance $u\in\U$ according to a production distribution $S(u|m)$. 
Upon hearing this utterance, the listener interprets it according to an inference distribution $L(m|u)$.
RSA recursively relates the speaker and listener by assuming that the speaker rationalizes about the listener's inferences, and that the listener is Bayesian with respect to the speaker's distribution.
This recursion is typically initialized with a literal listener $L_0$ that makes inferences based on the literal meaning of utterances; that is, 
$L_0(m|u) \propto l(m,u)P(m)$.
The pragmatic speaker is bounded-rational with respect to a utility function $V_L(m,u)$, typically defined by
\eqn{
V_L (m,u) =  \log L(m|u) - C(u)\,, \label{eq:V}
}
where $C(u)\ge0$ specifies the cost of $u$. The speaker and listener at recursion depth $t\ge1$ are defined by
\eqarrn{
S_t(u|m) &\propto& e^{\alpha V_{t-1}(u,m)} \label{eq:S}\\
L_t(m|u) &\propto& S_t(u|m)P(m)\,, \label{eq:L}
}
where $V_{t-1}$ is simplified notation for $V_{L_{t-1}}$, and $\alpha\ge0$ controls the degree to which the speaker is rational with respect to maximizing utility.
The dynamics of this recursive process are illustrated in~\figref{fig:rsa-dynamics}.
We denote by $\Sinf$ and $\Linf$ the speaker's and listener's distributions at the limit $t\rightarrow\infty$.

\begin{figure}[t!]
    \centering
    \subfloat[]{
    \renewcommand{\arraystretch}{1.1}
    \begin{tabular}{r|ccc}
     & \begin{tabular}[c]{@{}c@{}}\textsc{m}\\\includegraphics[height=0.8cm]{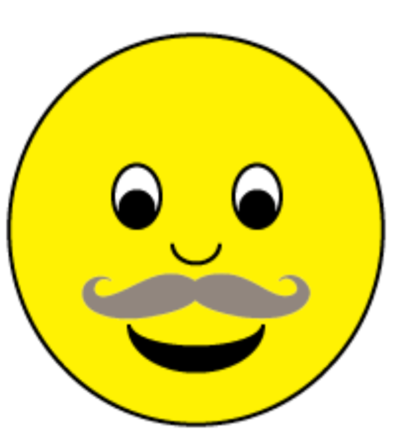}\end{tabular} & \begin{tabular}[c]{@{}c@{}}\textsc{gm}\\\includegraphics[height=0.8cm]{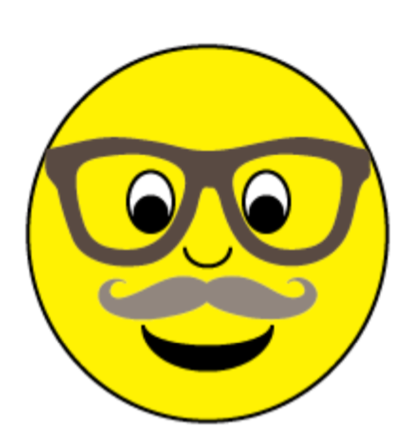}\end{tabular} & \begin{tabular}[c]{@{}c@{}}\textsc{hg}\\\includegraphics[height=0.8cm]{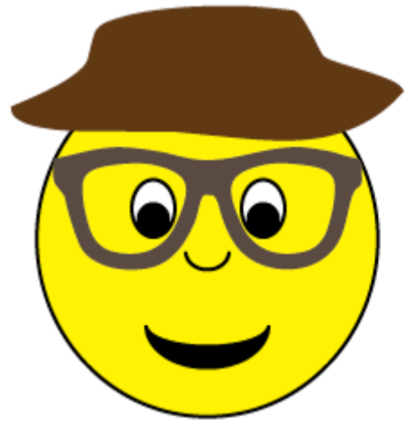}\end{tabular} \\ \hline
    mustache & 1 & 1 & 0 \\
    glasses & 0 & 1 & 1 \\
    hat & 0 & 0 & 1
    \end{tabular}
    \label{fig:lexicon}
    }\qquad
    \subfloat[]{
    \includegraphics[width=.15\linewidth]{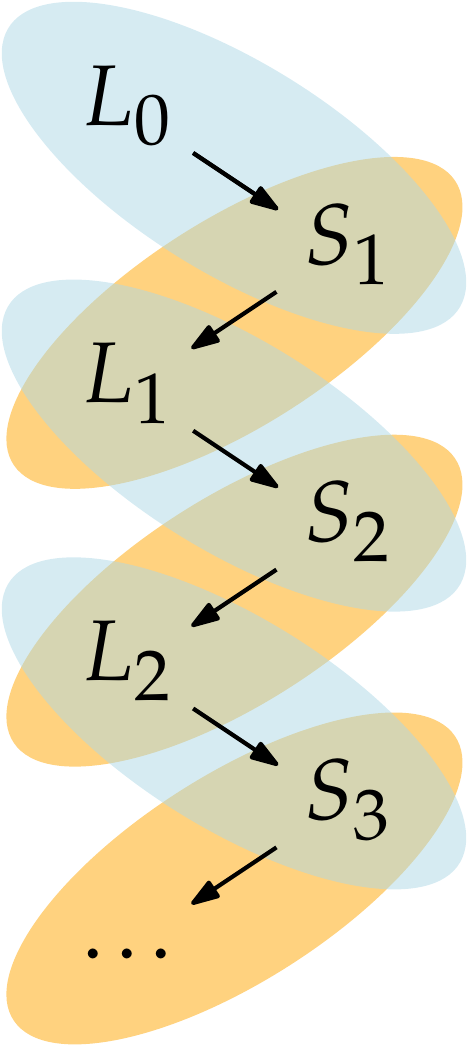}
    \label{fig:rsa-dynamics}
    }
    \caption{\textbf{(a)} Example lexicon with meanings (columns) and utterances (rows) adapted from \protect\citeA{Vogel2014}.
    Upon hearing \emph{glasses}, a pragmatic listener must make a multi-step inference to select \textsc{gm}: while both \textsc{gm} and \textsc{hg} have another possible descriptor, \emph{hat} could have unambiguously identified \textsc{hg}, whereas \emph{mustache} ambiguously refers to both \textsc{m} and \textsc{gm}.
    \textbf{(b)}~Illustration of the dynamics of RSA recursion. 
    The literal listener $L_0$ is initialized from the lexicon. Black arrows correspond to agent updates.
    Blue ovals highlight $(L_t, S_{t+1})$ agent pairs, and orange ovals highlight $(S_t, L_t)$ agent pairs.
    }
     \label{fig:rsa-setup}
\end{figure}

\section{Understanding RSA dynamics as\\ Alternating--Maximization}
\label{sec:AM-RSA}

Our first theoretical result is that the RSA recursion optimizes a tradeoff between maximizing expected utility,
\eqn{
\Ep{S}[V_L] = \sum_{m,u} P(m)S(u|m)V_L(m,u)\,,
}
and maximizing the conditional entropy of the speaker's production distribution,
\eqn{
H_S(U|M) = -\sum_{m,u} P(m) S(u|m) \log S(u|m)\,.
}
More precisely, we show that for any $\alpha\ge0$, RSA's pragmatic interlocutors jointly maximize
\eqn{
\Ga[S, L] = H_S(U|M) +\alpha\Ep{S}[V_L]\,.\label{eq:MaxEnt-RSA}
}
In this view, $\alpha$ does not trade off against recursion depth, as is widely understood~\cite{Frank2018}. Rather, the value of $\alpha$ determines the tradeoff between one conception of communicative effort, $H_S(U|M)$, and expected utility that is optimized by RSA recursion.

Importantly, this result holds without assuming that $S$ and $L$ satisfy equations~\eqref{eq:S} and~\eqref{eq:L} respectively. Instead, the optimization is taken over all valid speaker and listener distributions, and the RSA equations emerge as characterizing the optimal agents.
The main idea of the proof is to take the derivatives of $\Ga$ with respect to $S$ and $L$, and equate these derivatives to zero to obtain conditions for optimality. 
The formal statement and proof of this claim are given in the Supplementary Material (SM, Proposition~\thmAMRSA).
Here, we discuss the implications of this result and demonstrate it numerically. 

\begin{figure}[t!]
\centering
\includegraphics[width=.85\linewidth, trim={2em 1em .1em 1em}]{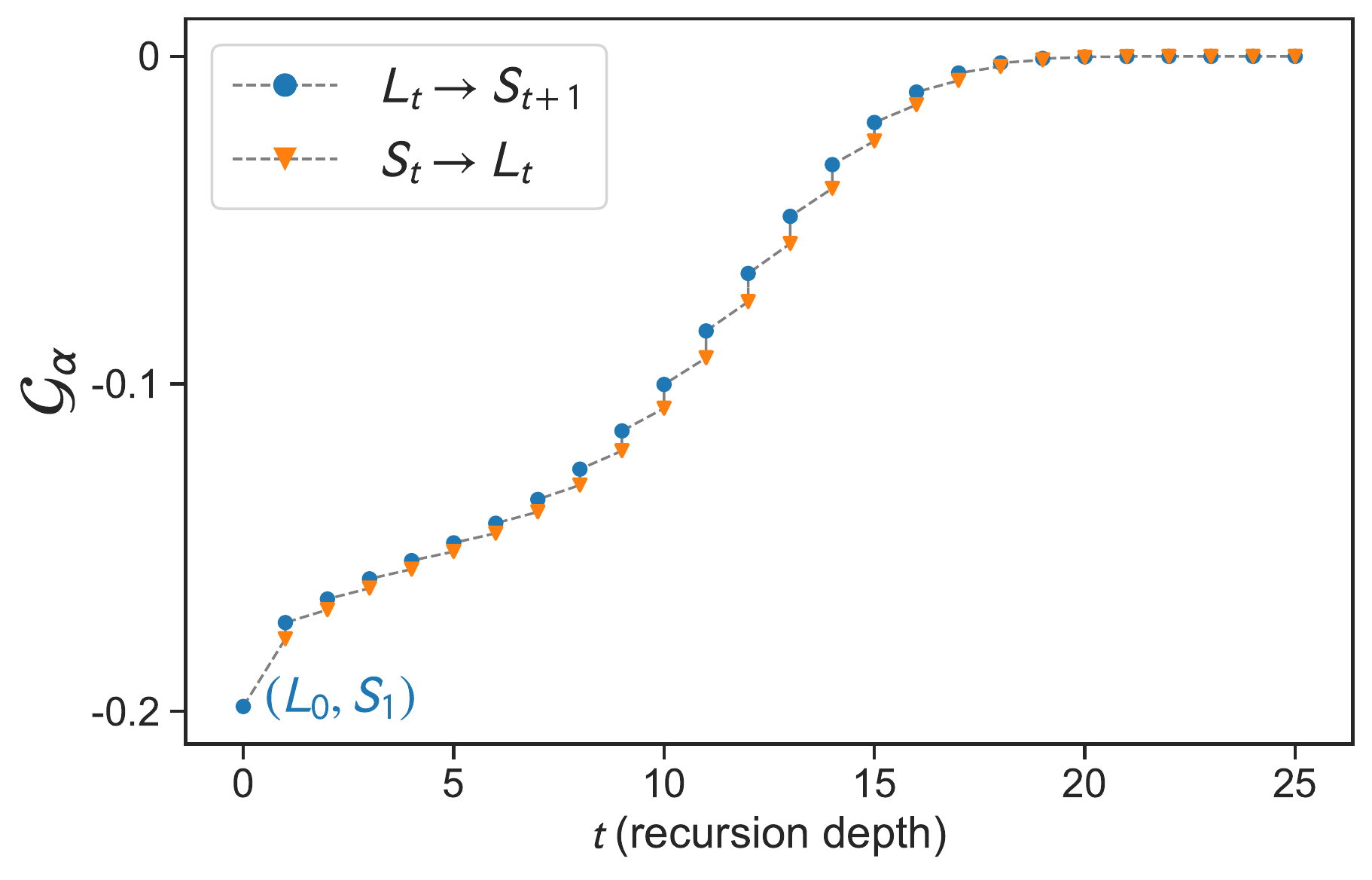}
\caption{Model simulations demonstrate that RSA recursion implements an alternating maximization algorithm. This example is based on the same model of~\figref{fig:simulations1} ($\alpha=1.2$).
The RSA tradeoff $\Ga$ improves with each speaker update (blue) and each listener update (orange).
\vspace{-.5em}
}
\label{fig:altmax-trajectory}
\end{figure}

\begin{figure*}[t!]
\centering
\subfloat[]{
    \includegraphics[height=1.8in]{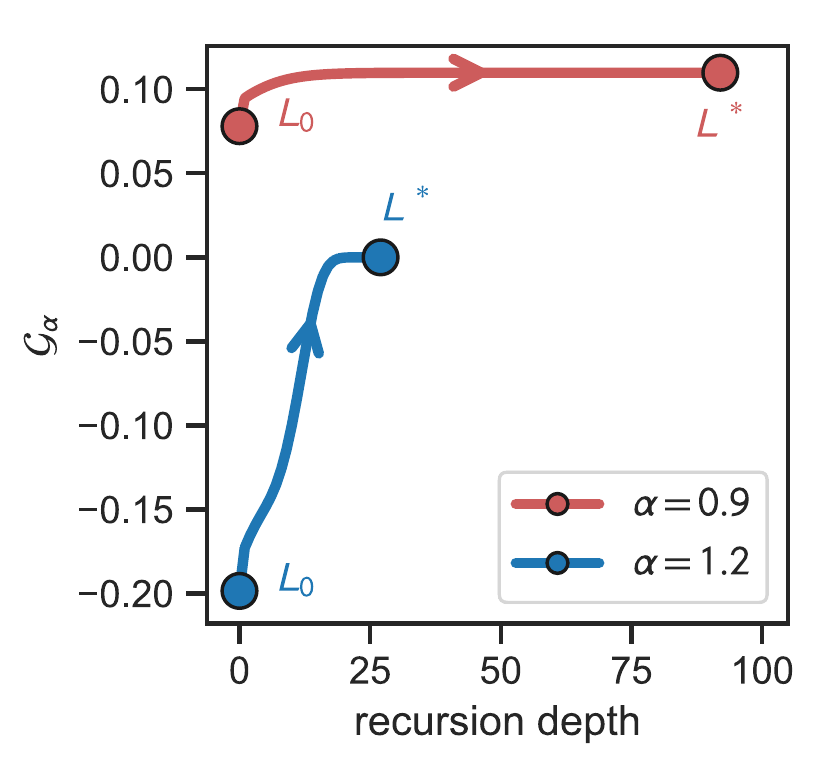}
    \label{fig:trajectory}
}\qquad
\subfloat[]{
    \includegraphics[height=1.8in]{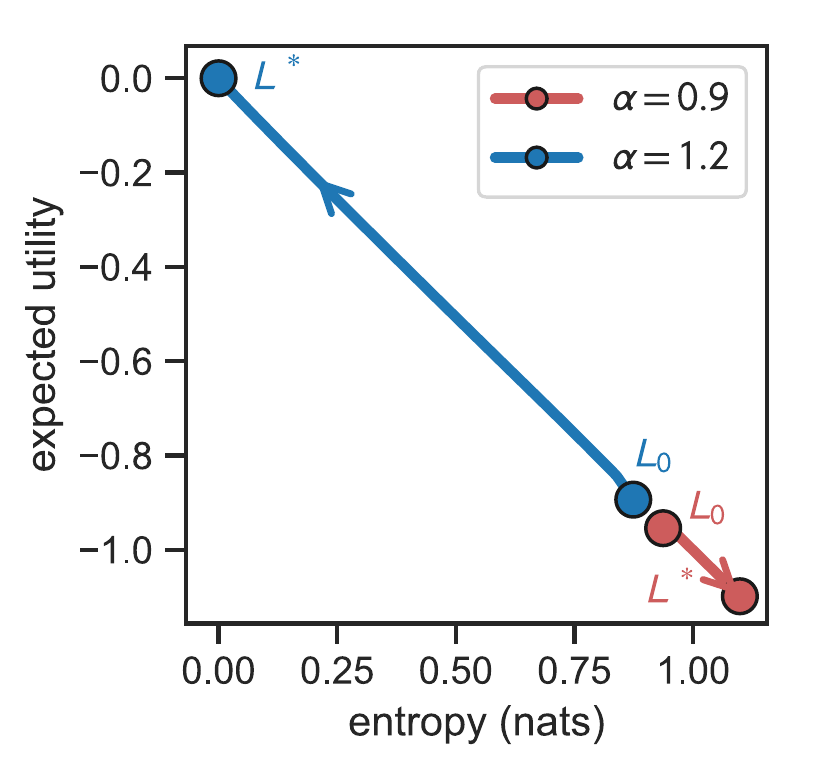}
    \label{fig:tradeoff}
}\qquad
\subfloat[]{
    {\includegraphics[width=0.18\textwidth]{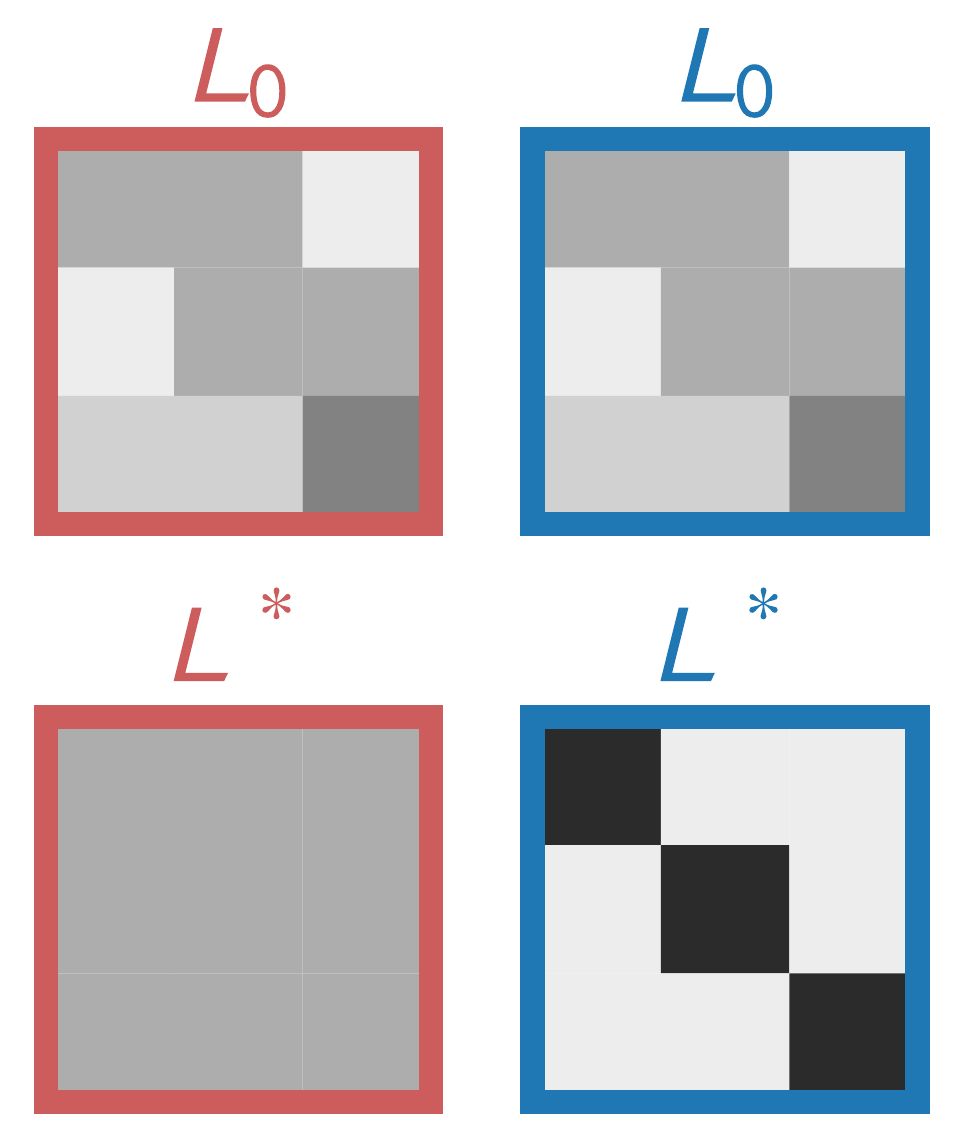}}
    \label{fig:dists}
}
\caption{Model simulations demonstrating qualitatively different behavior as a function $\alpha$. See text for details.
\textbf{(a)}~The RSA tradeoff $\Ga$ is monotonically non-decreasing with recursion depth.
\textbf{(b)}~Expected utility as a function of the speaker's entropy, $H_S(U|M)$.
The initialization of $L_0$ is the same for both values of $\alpha$; however, the initial points do not overlap because they correspond to $(L_0, S_1)$.
\textbf{(c)}~Listener distributions at initial and converged conditions for $\alpha=0.9$ (red) and $\alpha=1.2$ (blue). Matrix rows correspond to speaker utterances and columns to inferred meanings. Darker grays correspond to higher probabilities.
\vspace{-.5em}
}
\label{fig:simulations1}
\end{figure*}

First, optimizing $\Ga$ can be interpreted as a type of least-effort principle~\cite{Zipf1949}. Maximizing $\Ep{S}[V_L]$ amounts to minimizing the expected listener's surprisal and the cost of utterances. Maximizing $H_S(U|M)$ amounts to minimizing communicative effort, measured by the deviation from a random production of utterances. These two terms compete: attaining the maximal expected utility requires a very precise selection of utterances, leading to low entropy (high effort), while attaining minimal communicative effort requires a random production distribution, leading to low expected utility. 
Therefore, least-effort optimization may give rise to human pragmatic reasoning as captured by RSA.

Second, Proposition~\thmAMRSA\ in the SM implies that the RSA recursion implements an instance of the alternating maximization algorithm~\cite{Csiszar2004}. That is, given a listener $L_{t-1}$ it holds that
\eqn{
S_t = \argmax_{S}\;\Ga[S,L_{t-1}]\,,\label{eq:St-AM-RSA}
}
and given a speaker $S_t$ it holds that
\eqn{
L_t = \argmax_{L}\;\Ga[S_t,L]\,.\label{eq:Lt-AM-RSA}
}
In particular, this means that $\Ga$ is monotonically non-decreasing with recursion depth. That is, for every $t\ge1$
\eqn{
\Ga[S_t, L_{t-1}] \le \Ga[S_t, L_t] \le \Ga[S_{t+1}, L_t]\,.
}
This process is demonstrated by the model simulation of \figref{fig:altmax-trajectory}.
Because $\Ga$ is bounded from above, it follows that RSA iterations are guaranteed to converge,\footnote{To be precise, this implies that the value of $\Ga$ is guaranteed to converge to a local optimum. For stronger convergence conditions see~\cite{Csiszar1984}.}
and the fixed points of this recursion are stationary points of $\Ga$.

This result disconfirms a known conjecture about RSA dynamics.
Because the RSA speaker is guided by (soft) optimization of utterance utility, the intuition is widely held that RSA recursion locally maximizes expected utility~(e.g., \citeNP{yuan2018,peloquin-etal:2019}). However, our analysis reveals that RSA recursion optimizes $\Ga$, and this does not imply that the expected utility is maximized.
In fact, it is possible to show that the listener maximizes the expected utility, but not the speaker.
Intuitively, this holds because $H_S(U|M)$ depends only on $S$, and therefore in practice the listener's update step~\eqref{eq:Lt-AM-RSA} maximizes only the expected utility, whereas the speaker's update step~\eqref{eq:St-AM-RSA} trades it off with communicative effort, which may result in lower expected utility.
To see this more formally, fix $S_t$ and note that
\eqarrn{
\Ep{S_t}[V_t] &=& \frac1{\alpha}(\Ga[S_t,L_t] - H_{S_t}(U|M))\\
&=& \frac1{\alpha}(\max_{L}\;\Ga[S_t,L] - H_{S_t}(U|M))\\
&=& \max_{L}\;\Ep{S_t}[V_L]\,,
}
which implies in particular that $\Ep{S_t}[V_t] \ge \Ep{S_t}[V_{t-1}]$ at every recursion depth $t$. 
However, the expected utility may decrease due to the speaker update step, that is, it is possible that $\Ep{S_t}[V_t] < \Ep{S_{t-1}}[V_{t-1}]$.

This observation is exemplified in \figref{fig:simulations1}. To this end, we considered a standard RSA model with three uniformly distributed meanings and three possible utterances. The lexicon in this example is a graded\footnote{While the lexicon is often taken to be binary, graded lexica have also been notably considered (e.g., ~\citeNP{yuan2018}).} version of the lexicon shown in~\figref{fig:lexicon}, as can be seen by the structure of the literal listener (\figref{fig:dists}, $L_0$). For simplicity, we take $C(u)=0$.
Consistent with our theoretical analysis, RSA iterations always improve $\Ga$ (\figref{fig:trajectory}), but expected utility may increase (\figref{fig:tradeoff}, blue trajectory), or decrease (red trajectory), depending on $\alpha$.
Our analysis also implies that in cases where the initial state $(L_0, S_1)$ has maximal $H_S(U|M)$ given hard lexical constraints  (i.e., structural zeros in the lexicon arising when some messages do not satisfy the truth conditions of some utterances), the expected utility will not decrease.
We speculate that the possibility of RSA iteration decreasing expected utility has not previously been identified in numeric simulations because RSA initializations are typically already high in speaker conditional entropy.

In the following sections, we build on this new interpretation of RSA in order to ground it in a fundamental information-theoretic principle, and to study analytically several properties of these models, including the influence of $\alpha$ and the asymptotic behavior of their dynamics.

\section{Grounding RSA in Rate--Distortion\\ theory}

Returning to the general communication setup of RSA, the speaker can be seen as a probabilistic encoder and the listener as a probabilistic decoder. From an information-theoretic perspective, optimal encoder-decoder pairs in this setup
 are characterized by Rate--Distortion (RD) theory~\cite{Shannon1948} --- the subfield of information-theory that concerns efficient source coding with respect to a fitness function (or distortion) between a target message (speaker's meaning) and a reconstructed message (listener's interpretation). In this view, the speaker and listener should jointly optimize the tradeoff between maximizing the expected utility and minimizing the number of bits required for communication. The latter is captured by minimizing the mutual information between speaker meanings and utterances
\eqn{
I_S(M;U) = H_S(U) - H_S(U|M)\,,\label{eq:MI}
}
where $H_S(U)$ is the entropy of the marginal distribution of speaker utterances $S(u)=\sum_m P(m)S(u|m)$. In other words, from a RD perspective, the speaker and listener should jointly minimize the tradeoff
\eqn{
\Fa[S, L] =  I_S(M;U) - \alpha \Ep{S}[V_L] \label{eq:RD-RSA}\,.
}
This type of RD tradeoff is closely related to the RSA tradeoff. This can be seen by plugging equations~\eqref{eq:MI} and~\eqref{eq:MaxEnt-RSA} into~\eqref{eq:RD-RSA}, which gives
\eqn{
\Fa[S, L] =  H_S(U)- \Ga[S,L]\,.
}
We therefore refer to the optimization of $\Fa$ as RD-RSA.

The key difference between RSA and RD-RSA is the utterance entropy term $H_S(U)$.
RSA optimization, i.e., maximizing $\Ga$, is equivalent to minimizing $\Fa[S,L]$ while also maximizing $H_S(U)$. Minimizing $\Fa[S,L]$ alone yields a modified model of pragmatic reasoning. Following a similar derivation as the derivation of RSA as alternating--maximization, we show that RD-RSA predicts the following recursive reasoning process (SM, Proposition~\thmRDRSA):
\eqarrn{
S_t(u|m) &\propto& S_{t-1}(u)\exp\left(\alpha V_{t-1}(m,u)\right)\label{eq:S-RD}\\
S_t(u) &=& \sum_m S_t(u|m)P(m)  \label{eq:Su-RD} \\
L_t(m|u) &\propto& S_t(u|m)P(m)\,. \label{eq:L-RD}
}
These update equations also implement an alternating optimization algorithm, this time with respect to $\Fa$.
The optimal RD-RSA listener is Bayesian~\eqref{eq:L-RD}, exactly as in RSA. However, the optimal pragmatic speaker~\eqref{eq:S-RD} differs from the RSA speaker~\eqref{eq:S} in that it weights the soft-max utility term by the marginal utterance probability $S(u)$. One might be inclined to think that this adjustment is simply a special instance of RSA with cost function $C(u)=-\log S(u)$. We wish to emphasize that this is not the case. $S(u)$ is not pre-determined, as a cost function would be in RSA, but rather changes with each iteration as the speaker reasons about the listener.

This analysis shows that with a small adjustment, RSA can be grounded in RD theory. While RD-RSA is closely related to RSA, the theoretical motivation and precise predictions of these two principles are different.
Next, we compare several properties of RSA and RD-RSA in order to gain insight into which principle might better characterize human pragmatic reasoning.

\section{Properties of RSA and RD-RSA}

Building on the results presented above, we study the asymptotic tendencies of RSA and RD-RSA and evaluate their predictions on existing experimental data. 

\subsection*{Asymptotic behavior and criticality of $\alpha=1$}

To gain a better understanding of the dynamics of RSA and RD-RSA recursion, we analyze their asymptotic behavior. That is, we focus on the set of optimal pairs $(S^*,L^*)$ as a function of $\alpha$. For RSA, these are the fixed points of equations~\eqref{eq:St-AM-RSA} and~\eqref{eq:Lt-AM-RSA} (or equivalently, \eqref{eq:S} and~\eqref{eq:L}), and for \mbox{RD-RSA} these are the fixed points of equations~\eqref{eq:S-RD}-\eqref{eq:L-RD}. This reveals the general tendencies of the model dynamics, and shows how the value of $\alpha$ may influence the model predictions.
Therefore, this asymptotic analysis is useful even if human pragmatic reasoning, to the extent that it resembles this recursive process, might be confined to a small number of iterations. As before, the formal proofs are provided in the SM (Section 3), and here we discuss these results while focusing on the main conclusions.

We begin by considering a basic RSA setup in which the speaker's meanings are distributed uniformly, $C(u)=0$, and the number of unique utterances is the same as the number of meanings. In addition, we first assume a graded lexicon $l(m,u)\in(0,1]$, where the values defining the applicability of utterances to meanings can be arbitrarily small but not zero.
We prove in the SM that in both RSA and \mbox{RD-RSA}, $\alpha=1$ is a critical point at which the optimization dynamics changes its direction. That is, there are two regimes: $\alpha\in[0,1]$, in which the non-informative solution (i.e., maximal $H_S(U|M)$ in RSA, and $I_S(M;U)=0$ in RD-RSA) is optimal; and $\alpha\ge1$, in which the maximal-utility solution (i.e., \mbox{$\Ep{S}[V_L]=0$}) is optimal.
This transition can be clearly seen for RSA in~\Cref{fig:tradeoff}, and similar simulations of \mbox{RD-RSA} exhibit the same behavior (not shown). Our analysis in the SM also shows that RSA and RD-RSA differ at $\alpha=1$. At this point, in RD-RSA, but not in RSA, all fixed points are globally optimal.

Next, we consider the same basic setup but allow structural zeros in the lexicon.
In this case, if there exists a maximal-utility solution (e.g., a bijection from meanings to utterances) that does not violate the lexicon, which is often the case, then
it remains a global optimum for $\alpha\ge1$, as in the case of a graded lexicon.
On the other hand, the non-informative solution typically violates the lexicon, leading to a different behavior in the regime of $\alpha<1$.
We demonstrate this by the model simulations shown in \Cref{fig:vogel} (top), using the binary lexicon of \Cref{fig:lexicon}.
As expected, for $\alpha>1$ both RSA and RD-RSA converge to the maximal-utility point (blue and green trajectories), and for $\alpha=1$ RD-RSA converges immediately, while RSA still converges to the maximal-utility point (purple trajectories). For $\alpha <1$, the models cannot reach the non-informative solution
because it violates the lexicon (red trajectories). In this case, RD-RSA's trajectory moves toward the non-informative solution but converges at a solution with $I(M;U)>0$. RSA's trajectory moves in the other direction, but importantly it converges at a solution with lower expected utility compared to those of $\alpha\ge1$.

\begin{figure}[t!]
\centering
\includegraphics[width=.95\linewidth]{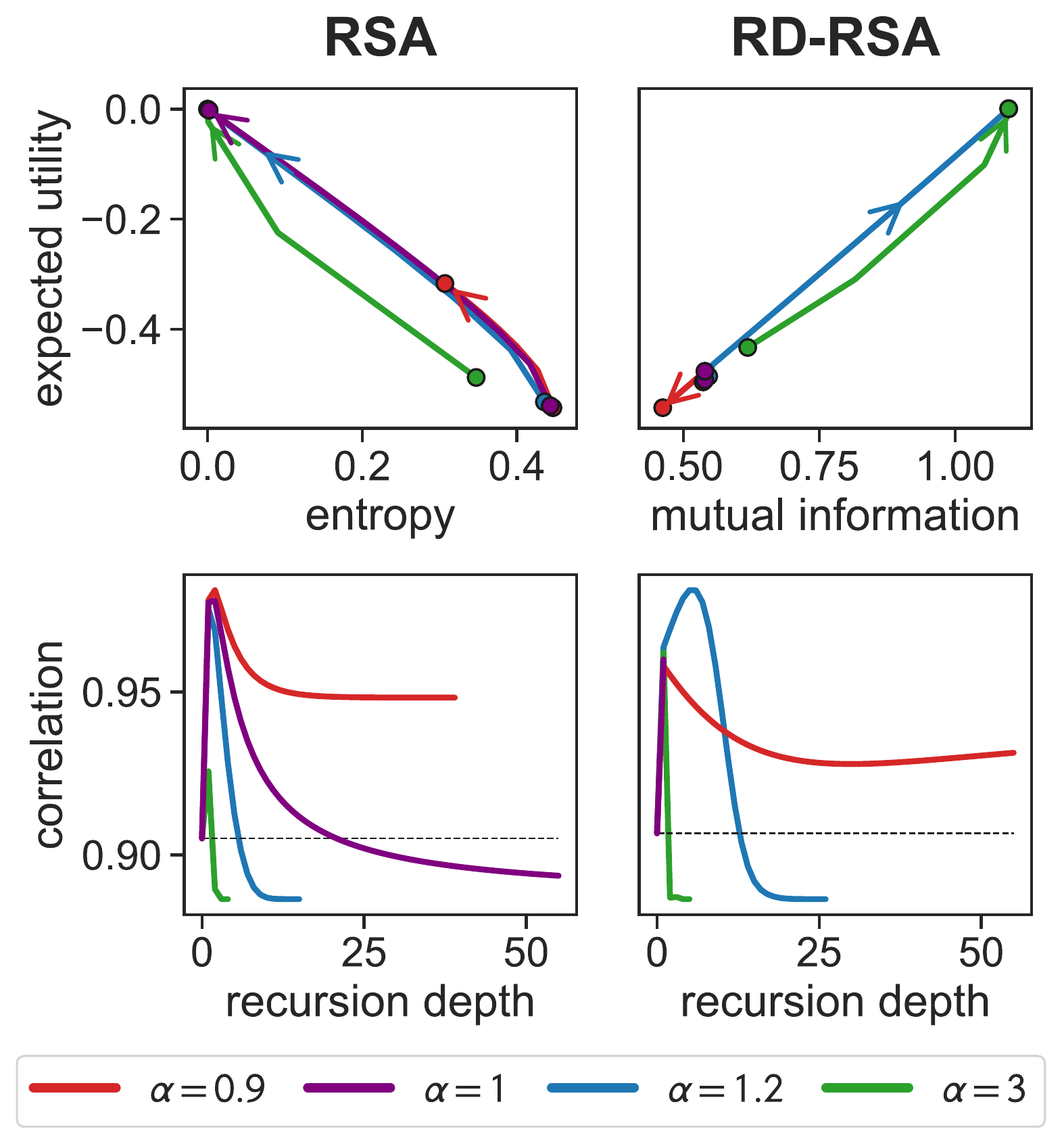}
\caption{Model simulations and evaluation with respect to the data from \protect\citeA{Vogel2014}.
Top: Simulated model trajectories from $L_0$ to $L^*$ in RSA (left) and RD-RSA (right). Bottom: Pearson correlation~$\rho$ between model predictions and human behavioral results as a function of recursion depth. Horizontal dashed line corresponds to the literal listener $L_0$.
}
\label{fig:vogel}
\end{figure}

Finally, we note that considering a non-trivial cost $C(u)$, or a two-positional cost $C(m,u)$, may change substantially the dynamics of the models. We present in the SM a preliminary analysis of the influence of the cost function on the behavior of the models, and leave to future work a more comprehensive analysis of this case.

\subsection*{Comparison with human behavior}

We have seen thus far that even though RD-RSA is closely related to RSA, it may generate different predictions. This raises the question how well can RD-RSA account for human behavior compared to RSA.
To address this, we consider data from an online reference game experiment conducted by \citeA{Vogel2014}.
Each trial presents a set of target objects and a set of possible speaker utterances\footnote{Several different stimulus types were presented during the experiment, but the structure of the lexicon was consistent across trials. Note that we only consider data from the Complex condition.} that conform to the lexicon shown in \Cref{fig:lexicon}. Participants were then given a speaker utterance, and were asked to indicate which of the target objects they think the speaker refers to.
As explained in~\Cref{fig:rsa-setup}, this experimental setup invites a complex pattern of pragmatic reasoning.
We estimated an empirical human listener from the responses recorded by  Vogel et al., and compared that to the pragmatic listeners predicted by RSA and RD-RSA with the lexicon of \Cref{fig:lexicon}.
 Following~\citeA{Frank2018}, who have previously presented an RSA account of these data, we assume a uniform prior over meanings and no utterance cost. This setting corresponds to the model simulations of \Cref{fig:vogel} (top), discussed earlier.
\Cref{fig:vogel} (bottom) shows the correlation between the model predictions and the behavioral data as a function of recursion depth. It can be seen that the predictions of both RSA and RD-RSA improve in the first few iterations, and then deteriorate with depth (except for $\alpha=1$ in RD-RSA, which converges immediately). This is consistent with prior work on RSA (e.g.,~\citeNP{Frank2018}). While the value of $\alpha$ and depth that best fit the data differ between RSA ($\alpha=0.9$, depth 1) and RD-RSA ($\alpha=1.2$, depth 5), their correlation is similar (RSA: $\rho=0.98$, RD-RSA: $\rho=0.97$), and so are their predicted listeners (see SM Section 4).
Therefore, RD-RSA is comparable to RSA in its ability to account for human behavior in this task.

\begin{figure}[b!]
\centering
\includegraphics[width=.95\linewidth]{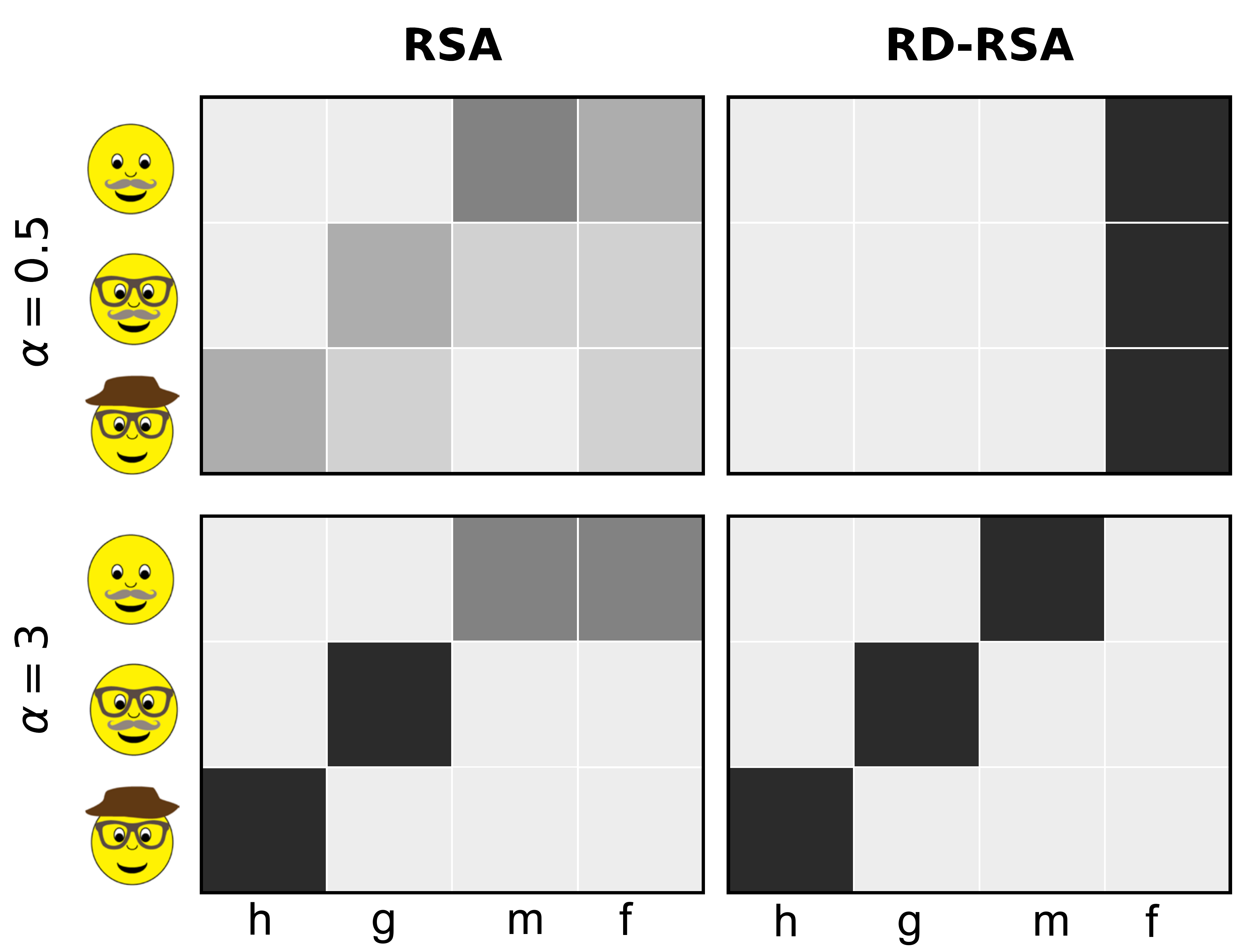}
\caption{Optimal speaker distributions for $\alpha=0.5$ (top) and $\alpha=3$ (bottom). Here, we augmented the lexicon of \Cref{fig:lexicon} with an extra utterance that can be applied to all referents  (e.g.\ \emph{friend}, labeled as `f').
RSA predicts a bias toward randomized productions, while RD-RSA does not.
\vspace{-.5em}
}
\label{fig:rsa-bias}
\end{figure}

\subsection*{Conditional entropy or mutual information?}

Last, we demonstrate an important implication of the key difference between RSA and RD-RSA, that is, maximizing $H_S(U|M)$ rather than minimizing $I_S(M;U)$. As noted before, this difference boils down to RSA maximizing $H_S(U)$ while also optimizing the RD-RSA objective. This implies that the RSA speaker, as opposed to the RD-RSA speaker, is biased toward random utterance productions. To demonstrate this, we ran similar simulations as in~\Cref{fig:vogel}, but now with an extra utterance that can be applied to all meanings (e.g., \emph{friend}). \Cref{fig:rsa-bias} shows the optimal speakers in this case for two values of $\alpha$.
For $\alpha=0.5$ there is strong pressure to minimize communicative effort. In this case, the RSA speaker uses all utterances almost uniformly, which does not convey much information to the listener, while the RD-RSA speaker follows an intuitively simpler production distribution that exclusively uses the extra utterance. For $\alpha=3$ there is strong pressure to maximize expected utility. In this case, the RSA speaker uses the extra utterance to randomize the description of one referent, because this increases entropy without changing the expected utility. In RD-RSA this speaker is also optimal, but so is the speaker that uses a single unique utterance for each referent (shown in the bottom right of~\figref{fig:rsa-bias}). Therefore, RD-RSA does not predict a cognitive bias toward randomness.

\section{Discussion}

Pragmatic reasoning is a crucial aspect of human language, often understood within the Rational Speech Act (RSA) framework. While this framework has been remarkably successful in explaining a wide range of psycholinguisic phenomena, it has not been cast in terms of a general optimization principle, leaving open the question of whether such a principle exists for characterizing human pragmatic behavior. Here, we have addressed this open question by presenting a novel information-theoretic analysis of the RSA framework. We have shown that RSA's recursive reasoning can be derived from least-effort optimization, in contrast to a widely held view of RSA as implementing a heuristic process for maximizing utility. Furthermore, we have shown that with a small adjustment, RSA can be grounded in a more fundamental optimization principle, RD-RSA, which is based on Shannon's Rate--Distortion (RD) theory.

We believe that RD-RSA is particularly noteworthy for several reasons.
First, as our results suggest, RD-RSA avoids RSA's bias toward non-informative random productions, while maintaining similar ability to account for human behavior. Our analysis has been based only on data from~\citeA{Vogel2014}, and so an important direction for future work is to further test the predictions of RD-RSA on additional experimental data.

Second, RD-RSA not only suggests a general information-theoretic principle that may give rise to human pragmatic reasoning, but also provides a potential theoretical link between pragmatics and several other aspects of language~\cite{Zaslavsky2018,Hahn2020} and cognition~\cite{Sims2016,Sims2018,Gershman2020} to which RD theory has recently been applied successfully. We also note that RSA's optimization principle~\eqref{eq:MaxEnt-RSA} suggests interesting theoretical links to other related frameworks, such as the recent Optimal Transport approach to cooperative communication~\cite{Wang2019}, although the latter applies only to a single iteration of pragmatic reasoning in the RSA framework (but see~\citeNP{yuan2018}, for the special case of $\alpha=1$). 

Finally, we argue that RD-RSA addresses a major concern about the applicability of information theory to pragmatics. As noted by Sperber and Wilson: ``there is a gap between the semantic representations of sentences and the thoughts actually communicated by utterances. This gap is filled not by more coding, but by inference'' (\citeNP{Sperber1986}, p.~9). RD theory lies at the intersection of information theory and statistical inference, and thus it may capture both aspects of coding and inference in pragmatics, and perhaps in language more generally.

\section*{Acknowledgments}
N.Z. was supported by a BCS Fellowship in Computation; J.H. was supported by the NIH under award number T32NS105587 and an NSF Graduate Research Fellowship; R.P.L. was supported by NSF grant BCS-1456081, a Google Faculty Research Award, and Elemental Cognition.

\bibliographystyle{apacite}
\setlength{\bibleftmargin}{.125in}
\setlength{\bibindent}{-\bibleftmargin}
\bibliography{bibliography}
\protect\balance

\newpage
\newgeometry{margin=2.5cm}
\fontsize{11}{14}\selectfont
\setcounter{section}{0}
\setcounter{footnote}{0}
\onecolumn

\vspace*{.5cm}
\begin{center}
{\LARGE Supplementary Material\\[.5em] \textbf{A Rate--Distortion view of human pragmatic reasoning}}
\end{center}
\vspace*{.5cm}

\section{Understanding RSA dynamics as Alternating Maximization}

Here we prove that RSA recursion implements an alternating maximization algorithm for optimizing
\eqn{
\Ga[S, L] = H_S(U|M) +\alpha\Ep{S}[V_L]\,.\label{sm-eq:MaxEnt-RSA}
}
Before doing so, we introduce several required definitions and notations.
First, we formally define an RSA reference game by a tuple $<\M, \U, P(m), l, C>$, where $\M$ is a finite set of meanings (referents), $\U$ is a finite set of utterances, $P(m)$ is a prior distribution over $\M$, $l$ is a lexicon, and $C$ is a cost function. We formally define a lexicon by a mapping $l:\M\times\U\rightarrow [0,1]$, such that $l(m,u) > 0$ if $u$ can be applied to $m$ and $l(m,u) = 0$ otherwise. Unless stated otherwise, we assume that $C:\U\rightarrow\reals_+$, namely that $C$ is a non-negative utterance cost function. More generally, $C$ could also be a two-positional cost function, i.e., $C:\M\times\U\rightarrow\reals_+$ .
Next, we define the set of all speaker and listener distributions that do not violate a given lexicon $l$. Denote by $\simplex{\U}$ the simplex of probability distributions over $\U$, and by $\simplex{\U}^{\M}$ the set of all conditional probability distributions of $U$ given $M$. Similarly, denote by $\simplex{\M}^{\U}$ the set of all conditional probability distributions of $M$ given $U$. The set of all possible speakers that do not violate the lexicon is then
\eqn{
\S_l = \{S\in \simplex{\U}^{\M}: S(u|m) = 0\, \mbox{ if } l(m,u)=0 \}\,,
}
and the set of all possible listeners that do not violate the lexicon is
\eqn{
\L_l = \{L\in \simplex{\M}^{\U}: L(m|u) = 0\, \mbox{ if } l(m,u)=0 \}\,.
}
It is easy to verify that $\S$ and $\L$ are convex sets.

\begin{Proposition}[RSA optimization]
\label{thm:maxent-rsa}
Let $\alpha\ge0$. The following statements hold for RSA:
\begin{itemize}
\item RSA recursion implements an alternating maximization: for all $t\ge1$, for a fixed $L_{t-1}$ it holds that 
\eqn{
S_t = \argmax_{S\in\simplex{\U}^{\M}}\;\Ga[S,L_{t-1}]\,,\label{sm-eq:S-AM}
}
and for a fixed $S_t$ it holds that
\eqn{
L_t = \argmax_{L\in\simplex{\M}^{\U}}\;\Ga[S_t,L]\,,\label{sm-eq:L-AM}
}
where $S_t$ and $L_t$ are RSA's speaker and listener distributions at recursion depth $t$.
\item If $L_{t-1}\in\L_l$ then $S_t\in\S_l$, and if $S_t\in\S_l$ then $L_t\in\L_l$. That is, RSA iterations do not violate the hard lexicon constraints. 
\item The fixed points of the RSA recursion are stationary points of $\Ga$.
\end{itemize}
\end{Proposition}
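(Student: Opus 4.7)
My plan is to handle the three bullets in the order they are stated, since the second and third follow rather cleanly once the alternating--maximization characterization is in place.

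For the first bullet, the approach is a direct constrained optimization. Fixing $L_{t-1}$, view $\Ga[\,\cdot\,,L_{t-1}]$ as a function on $\simplex{\U}^{\M}$ and add Lagrange multipliers $\{\lambda_m\}$ enforcing $\sum_u S(u|m)=1$ for each $m$. Differentiating with respect to $S(u|m)$ gives
\[
-P(m)\bigl(\log S(u|m)+1\bigr)+\alpha P(m)V_{t-1}(m,u)-\lambda_m=0,
\]
which rearranges to $S(u|m)\propto \exp\bigl(\alpha V_{t-1}(m,u)\bigr)$, matching equation~\eqref{eq:S}. Symmetrically, fixing $S_t$ and using multipliers $\{\mu_u\}$ for $\sum_m L(m|u)=1$, only the $\alpha\Ep{S_t}[V_L]$ term depends on $L$, and differentiating with respect to $L(m|u)$ yields
\[
\alpha P(m)S_t(u|m)/L(m|u)-\mu_u=0,
\]
so $L(m|u)\propto S_t(u|m)P(m)$, matching equation~\eqref{eq:L}. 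Uniqueness of the maximizers at each half--step follows because $-S\log S$ is strictly concave in $S$ (so $\Ga[\,\cdot\,,L_{t-1}]$ is strictly concave on $\simplex{\U}^{\M}$), and $L\mapsto \log L$ is strictly concave (so $\Ga[S_t,\,\cdot\,]$ is strictly concave on $\simplex{\M}^{\U}$). Hence the KKT points identified above are the unique global maximizers, establishing~\eqref{sm-eq:S-AM} and~\eqref{sm-eq:L-AM}.

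For the second bullet, the observation is that zeros propagate through the RSA equations. If $L_{t-1}\in\L_l$, then $l(m,u)=0$ forces $L_{t-1}(m|u)=0$ and hence $V_{t-1}(m,u)=-\infty$, so $\exp(\alpha V_{t-1}(m,u))=0$ and $S_t(u|m)=0$, i.e.\ $S_t\in\S_l$. Analogously, if $S_t\in\S_l$, then $l(m,u)=0$ forces $S_t(u|m)=0$ and hence $L_t(m|u)\propto S_t(u|m)P(m)=0$, so $L_t\in\L_l$. (This also means the Lagrangian argument above can be carried out on the restricted simplices $\S_l$ and $\L_l$ with no change, since the active lexicon constraints are already enforced by the form of the optimizer.) The third bullet is then immediate: if $(S^*,L^*)$ satisfies $S^*=S_{t+1}$ when $L_t=L^*$ and $L^*=L_t$ when $S_t=S^*$, then by~\eqref{sm-eq:S-AM}--\eqref{sm-eq:L-AM} the partial gradients of $\Ga$ (modulo the normalization multipliers) vanish at $(S^*,L^*)$, which is the definition of a stationary point.

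The main technical subtlety I expect is the treatment of the boundary of the simplex, where $S(u|m)=0$ or $L(m|u)=0$ makes the Lagrangian formally singular (through $\log 0=-\infty$ in either $H_S$ or $V_L$). I would handle this by using the standard convention $0\log 0=0$ and interpreting the KKT conditions on the relative interior of the face of the simplex compatible with the lexicon, as justified by the zero--propagation argument of the second bullet. With that caveat, no further regularity assumptions are needed: strict concavity in each variable separately delivers uniqueness of the half--step optima, and the alternating--maximization characterization delivers the remaining two bullets essentially for free.
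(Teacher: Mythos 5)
Your proposal follows essentially the same route as the paper's proof: a Lagrangian for each half-step, the same two derivative computations, concavity in each argument separately to upgrade the stationarity conditions to sufficiency, and zero-propagation through the update equations to handle the lexicon constraints and the simplex boundary. The only (minor) overstatement is the claim of \emph{strict} concavity and hence uniqueness of each half-step maximizer, which fails in degenerate cases (e.g.\ $\alpha=0$, or coordinates with $S_t(u|m)=0$, where $\Ga[S_t,\cdot]$ is flat in $L(m|u)$); the paper only invokes plain concavity for sufficiency, which is all that is needed.
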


\begin{proof}
First, fix $L_{t-1}$ and note that the function $g(S)=\Ga[S, L_{t-1}]$ is concave in $S$. To find a maximizer for $g(S)$ over $\simplex{\U}^{\M}$ we define the Lagrangian 
\eq{
\mathbb{L}[S;\lambda] = g(S)- \sum_m \lambda(m) \sum_u S(u|m)\,,
}
where $\lambda(m)$ are the normalization Lagrange multipliers.\footnote{We omit the non-negativity constraints because these constraints are inactive.}
Note that if for some $m$ and $u$ it holds that \mbox{$L_{t-1}(m|u)=0$} and $S(u|m)>0$, then $g(S) = -\infty$. Therefore, at the maximum, it necessarily holds that if $L_{t-1}(m|u)=0$ then also $S(m|u)=0$ (following the convention that $0\log0=0$). In particular, this implies that if $L_{t-1}\in\L_l$ then $\argmax g(S)\in\S_l$. That is,  if $L_{t-1}$ does not violate the lexicon, then maximizing $g(S)$ is guaranteed to give a speaker that also does not violate the lexicon.
Taking the derivative of $\mathbb{L}[S;\lambda]$ with respect to $S(u|m)$, for every $m$ and $u$ such that $L_{t-1}(m|u)>0$, gives
\eq{
\dpp[\mathbb{L}]{S(u|m)} = P(m)\left[ -\log S(u|m) -1 +\alpha V_{t-1}(m,u) \right] - \lambda(m)\,.
}
Equating these derivatives to zero gives RSA's speaker $S_t$ (equation (2) in the main text), as a necessary condition for optimality. Because $g(S)$ is concave, this is also a sufficient condition for this step.

Next, fix $S_t$ and consider the function $h(L)=\Ga[S_{t}, L]$. This function is concave in $L$. To find a maximizer for $h$ over $\simplex{\M}^{\U}$, we define as before the corresponding Lagrangian and take its derivative with respect to $L(m|u)$. This gives
\eq{
\dpp[\mathbb{L}]{L(m|u)} = \alpha P(m)S_t(u|m) \frac{1}{L(m|u)} - \lambda(u)\,.
}
Equating this derivative to zero gives RSA's Bayesian listener $L_t$ (equation (3) in the main text) as a necessary condition for optimality. Because $h(L)$ is concave, this is also a sufficient condition for this step. It is also easy to verify that if $S_t(u|m)=0$ then $L_t(m|u)=0$, and therefore if $S_t\in\S_l$ then $L_t\in\L_l$.

Finally, at a fixed point $(S^*, L^*)$ both the derivatives with respect to $S$ and  to $L$ are zero. Because these are also the derivatives of $\Ga$ over $\simplex{\U}^{\M}\times\simplex{\M}^{\U}$, it holds that $(S^*, L^*)$ is a stationary point of $\Ga$. Note that $\Ga$ is not jointly concave in $S$ and in $L$, and therefore, $(S^*, L^*)$ is not necessarily a global maximum.
\end{proof}

\section{Derivation of RD-RSA}

In this section we derive the RD-RSA update equations from the minimization of
\eqn{
\Fa[S,L] =  I_S(M;U) - \alpha \Ep{S}[V_L]\,.
}
This can be seen as a type of Rate--Distortion (RD) optimization problem with a \emph{variable} distortion measure \mbox{$d(m,u)=-V_L(m,u)$} between meanings and utterances.

\begin{Proposition}[RD-RSA]
\label{thm:basic-rd-rsa-opt}
Let $S\in\simplex{\U}^{\M}$ and $L\in\simplex{\M}^{\U}$. Given $\alpha>0$, $S$ and $L$ are stationary points of $\Fa$ if and only if they satisfy the following self-consistent conditions:
\eqarrn{
S(u|m) &\propto& S(u)\exp\left(\alpha V_L(m,u)\right)\label{sm-eq:S-RD}\\
S(u) &=& \sum_m S(u|m)P(m)\\
L(m|u) &=& \frac{S(u|m)P(m)}{S(u)}\label{sm-eq:L-RD}
}
\end{Proposition}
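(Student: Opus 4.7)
The plan is to mirror the Lagrangian/KKT approach used in the proof of Proposition~\thmAMRSA, but adapted to the minimization of $\Fa$ and to the presence of the mutual information term $I_S(M;U)$ in the objective. Concretely, I would first establish that both the $S$-subproblem (with $L$ held fixed) and the $L$-subproblem (with $S$ held fixed) are convex, set up the Lagrangians enforcing the normalization constraints on each conditional distribution, take partial derivatives, and match the resulting first-order stationarity conditions to the self-consistent equations~\eqref{sm-eq:S-RD}--\eqref{sm-eq:L-RD}. Since the ``if and only if'' is a statement about stationary points (not global optima), both directions follow from the standard fact that KKT equations for normalization constraints are both necessary and sufficient characterizations of critical points of a differentiable objective over the product of simplices.

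The listener step is essentially identical to the one in Proposition~\thmAMRSA: $I_S(M;U)$ does not depend on $L$, so the only $L$-dependent piece of $\Fa$ is $-\alpha\Ep{S}[V_L] = -\alpha\sum_{m,u}P(m)S(u|m)\log L(m|u) + \mathrm{const}$. Differentiating with respect to $L(m|u)$ and equating to the normalization multiplier $\lambda(u)$ yields $L(m|u)\propto P(m)S(u|m)$, which after normalization over $m$ gives exactly~\eqref{sm-eq:L-RD}. Convexity of $-\log L(m|u)$ in $L(m|u)$ makes this condition sufficient as well.

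The speaker step is where the new calculation lives, and it is where I expect the only real subtlety to arise. Writing $I_S(M;U) = \sum_{m,u} P(m) S(u|m)\log S(u|m) - \sum_u S(u)\log S(u)$ and using the chain rule through $S(u)=\sum_{m'}P(m')S(u|m')$, the derivative with respect to $S(u|m)$ is
\eq{
\dpp[I_S(M;U)]{S(u|m)} = P(m)\bigl[\log S(u|m) + 1\bigr] - P(m)\bigl[\log S(u) + 1\bigr] = P(m)\log\tfrac{S(u|m)}{S(u)}\,,
}
where the crucial simplification is the cancellation of the additive constants. Combining with $\partial(-\alpha\Ep{S}[V_L])/\partial S(u|m) = -\alpha P(m) V_L(m,u)$ and the normalization multiplier $\lambda(m)$, stationarity gives $\log(S(u|m)/S(u)) = \alpha V_L(m,u) - \lambda(m)/P(m)$, which rearranges to~\eqref{sm-eq:S-RD} with $S(u)$ determined by the consistency condition $S(u)=\sum_m P(m)S(u|m)$.

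Finally, for sufficiency I would note that $I_S(M;U)$ is convex in $S(\cdot|\cdot)$ for fixed $P(m)$ and that $-\alpha\Ep{S}[V_L]$ is linear in $S$, so $\Fa[S,L]$ is convex in $S$ for fixed $L$, making the first-order conditions sufficient for the $S$-subproblem; the same holds for the $L$-subproblem by the previous paragraph. The main obstacle, such as it is, is the bookkeeping in the $S$-derivative through the marginal $S(u)$, but once the cancellation above is carried out, the rest is a direct application of the same Lagrangian machinery as in Proposition~\thmAMRSA.
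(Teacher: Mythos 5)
Your proposal is correct and takes essentially the same route as the paper's (sketched) proof: block-wise Lagrangian first-order conditions matched to the self-consistent equations, with convexity of each subproblem noted. The only cosmetic difference is that the paper treats $S(u)$ as a third free variable whose own stationarity condition yields $S(u)=\sum_m S(u|m)P(m)$, whereas you fold the marginal into the speaker derivative via the chain rule and obtain the same cancellation $P(m)\log\frac{S(u|m)}{S(u)}$; both are standard Blahut--Arimoto-style derivations leading to identical conditions.
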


\begin{proof}
The main idea of the proof is to take the derivatives of $\Fa$ w.r.t. $S(u|m)$, $S(u)$, and $L(m|u)$, and equate these derivatives to zero, which gives the RD-RSA equations \eqref{sm-eq:S-RD}-\eqref{sm-eq:L-RD} as necessary conditions for optimality.
This derivation is similar to the derivation in the proof of Proposition~\ref{thm:maxent-rsa}, and therefore we do not repeat it here.
\end{proof}

Note that $\Fa$ is convex in $S(u|m)$, $S(u)$, and $L(m|u)$, although it is not jointly convex in these three distributions. Therefore, similar to the proof of Proposition~\ref{thm:maxent-rsa}, it holds that $\Fa$ can be optimized via an alternating minimization algorithm that iteratively updates equations \eqref{sm-eq:S-RD}-\eqref{sm-eq:L-RD}, as described in the main text. However, because $\Fa$ is not jointly convex in these variables, this iterative algorithm will not necessarily converge to a global optimum.

\section{Asymptotic behavior and the criticality of $\alpha=1$}

In this section we analyze the asymptotic behavior of RSA and RD-RSA dynamics. In both cases, we focus mainly on the basic RSA setup discussed in the main text. We also present preliminary analysis of the influence of the cost function.

\subsection{RSA}

Denote by $\Ga^*$ the maximal value of $\Ga$ given $\alpha$, and let $\Sa$ and $\La$ be optimal speaker and listener distributions that attain $\Ga^*$. That is, $\Ga^*=\Ga[\Sa,\La]$. The following proposition characterizes $\Ga^*$, $\Sa$, and $\La$, as a function of $\alpha$, in a basic RSA setup.

\begin{Proposition}[Asymptotic behavior of RSA]
\label{thm:basic-rsa-asympt}
Let $C(u)$ be a constant function, $P(m)$ be the uniform distribution over $\M$, and assume $K=|\M|=|\U|$. In addition, assume a graded lexicon $l$ with no structural zeros. Then the following statements hold:
\begin{enumerate}
\item $\Ga^* = \max\{(1-\alpha)\log K, 0\}$
\item For $\alpha\in [0,1]$, $\Sa(u|m) = \frac{1}{|\U|}$ and $\La(m|u)=P(m)$.
\item For $\alpha\ge1$, 
$\Sa$ and $\La$ are deterministic distributions defined by a bijection from $\M$ to $\U$.
\end{enumerate}
\end{Proposition}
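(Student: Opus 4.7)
The plan is to reduce the joint optimization over $(S,L)$ to an optimization over $S$ alone. For any fixed $S$, the only $L$-dependence of $\Ga[S,L]$ lies in the cross-entropy term $\alpha \sum_{m,u} P(m) S(u|m) \log L(m|u)$, which is maximized (as in Proposition~\ref{thm:maxent-rsa}) by the Bayesian posterior $L(m|u) = P(m) S(u|m)/S(u)$. Substituting back gives $\max_L \Ga[S,L] = H_S(U|M) - \alpha H_S(M|U)$. Using the identity $H_S(M|U) = H(M) + H_S(U|M) - H_S(U)$ with $H(M) = \log K$ (uniform prior), and absorbing the constant $C$ into an overall shift that I set to $0$, this simplifies to
\begin{equation}
\tilde G_\alpha(S) \eqdef \max_L \Ga[S,L] = (1-\alpha)\,H_S(U|M) + \alpha\,H_S(U) - \alpha\log K\,.
\end{equation}
It then remains to maximize $\tilde G_\alpha$ over $S\in\S_l$ subject to $0\le H_S(U|M)\le \log K$ and $H_S(U)\le \log K$.

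Next I would split on the sign of $1-\alpha$. For $\alpha\in[0,1]$ both coefficients are non-negative, so $\tilde G_\alpha(S) \le (1-\alpha)\log K + \alpha\log K - \alpha\log K = (1-\alpha)\log K$, with equality iff $S(u|m) = 1/|\U|$ for every $m$; the graded-lexicon hypothesis guarantees that this uniform speaker lies in $\S_l$, and the corresponding Bayesian listener is $\La(m|u) = P(m)$. For $\alpha\ge 1$ the sign of $(1-\alpha)$ flips, so $\tilde G_\alpha(S) \le \alpha\log K - \alpha\log K = 0$, attained iff $H_S(U|M) = 0$ (deterministic $S$) together with $H_S(U)=\log K$ (uniform marginal over $\U$); since $|\M|=|\U|=K$, these two conditions together force $S$ to be a bijection $\M\to\U$, and every such bijection is admissible under the graded hypothesis. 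The induced Bayesian listener is then the corresponding deterministic inverse. Combining the two cases yields $\Ga^* = \max\{(1-\alpha)\log K,\,0\}$, with the uniform and bijection solutions coinciding in value ($\Ga^*=0$) at the critical point $\alpha=1$.

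The main obstacle is more bookkeeping than conceptual. First, the Bayesian-posterior reduction needs care at $L(m|u)=0$ to avoid the $-\infty$ branch of $\log L$; this is automatic because the maximizing $L$ is strictly positive wherever $P(m)S(u|m)>0$, and entries with $S(u|m)=0$ vanish under the $0\log 0 = 0$ convention. Second, the graded-lexicon hypothesis is used exactly to certify that the candidate maximizers (uniform for $\alpha\le 1$, bijections for $\alpha\ge 1$) actually belong to $\S_l$; if structural zeros are reintroduced, the uniform speaker becomes inadmissible and the $\alpha<1$ regime behaves qualitatively differently, which is precisely the phenomenon flagged in the main text. A minor care-point is that the coexistence of uniform and bijection optima at $\alpha=1$ should be stated explicitly, since both claims~2 and~3 of the proposition formally apply there.
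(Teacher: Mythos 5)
Your proof is correct and follows essentially the same route as the paper's: the exact partial maximization over $L$ via the Bayesian posterior is just the tight case of the paper's cross-entropy bound $\sum_x p(x)\log q(x)\le\sum_x p(x)\log p(x)$, and your resulting objective $(1-\alpha)H_S(U|M)+\alpha H_S(U)-\alpha\log K$ is algebraically identical to the paper's bound $(\alpha-1)I_S(M;U)+H_S(U)-\alpha H(M)$, after which both arguments split on $\alpha=1$ and identify the same extremal speakers. Your explicit ``iff'' characterization of the maximizers and the remark about coexisting optima at $\alpha=1$ are slight refinements of the paper's attainment check, not a different method.
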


\begin{proof}
We prove these claims by first deriving an upper bound on $\Ga$ and then showing that the given $\Sa$ and $\La$ attain this bound in the two regimes of $\alpha$. Assume w.l.o.g. that $C(u)=0$, and let $S(m|u)$ be the posterior distribution with respect to $S(u|m)$ and $P(m)$. 
For any $S$ and $L$ it holds that
\eqarrn{
\Ga[S,L] & \le & H_S(U|M) + \alpha\Ep{S}\left[\log S(m|u)\right]\label{sm-eq:kl-bound}\\
& = & H_S(U|M) + \alpha\Ep{S}\left[\log \frac{S(u|m)P(m)}{S(u)}\right]\label{sm-eq:sub-bayes}\\
& = & (\alpha-1) I_S(M;U) + H_S(U) - \alpha H(M)\,.\label{sm-eq:G_bound}
}
Equation \eqref{sm-eq:kl-bound} follows from the fact that $\sum_x p(x)\log q(x) \le \sum_x p(x)\log p(x)$ for any two distributions, and specifically for $S(m|u)$ and $L(m|u)$. Equation \eqref{sm-eq:sub-bayes} follows from substituting Bayes' rule, and \eqref{sm-eq:G_bound} from the definition of entropy and the identity $I_S(M;U) = H_S(U)-H_S(U|M)$.

For $\alpha\in[0,1]$, this bound is maximal when $I_S(M;U)=0$ and $H_S(U)= \log K$. Therefore, in this regime $\Ga[S,L]\le\log K - \alpha H(M)$, 
and it is easy to verify that $S(u|m)=\frac1K$ and $L(m|u)=P(m)$ attain this bound (simply substitute these distributions in the definition of $\Ga$).
Specifically, when $P(m)$ is uniform, it holds that $H(M)= \log |\M| = \log K$, and therefore $\Ga[S,L]\le (1-\alpha)\log K $.
For $\alpha \ge1$ it holds that $(\alpha-1) I_S(M;U) \le (\alpha-1) \log K$, and therefore
$\Ga[S,L] \le \alpha \log K - \alpha H(M)$.
When $P(m)$ is uniform, this bound becomes $\Ga[S,L] \le 0$.
Let $\phi:\M\rightarrow\U$ be a bijection, and set $S(u|m)=\delta_{u, \phi(m)}$ and $L(m|u)= S(m|u) = \delta_{u, \phi(m)}$. In this case, $H_S(U|M)=0$ and $\E_S[V_L]=0$, following the convention that $0\log 0 =0$. Therefore, these distributions attain the bound for $\alpha\ge1$.
Putting everything together gives $\Ga^* = \max\{(1-\alpha)\log K, 0\}$, which concludes the proof.
\end{proof}

Proposition~\ref{thm:basic-rsa-asympt} shows that in the basic RSA setup that corresponds to its assumptions, there is only one critical value $\alpha_c=1$, which determines the global optimum of $\Ga$ and the asymptotic tendency of the RSA dynamics.
However, when $C(u)$ is not a constant function there could be multiple critical values $\alpha_c$. We next show that in this case the first critical value $\alpha_0$, at which the non-informative solution $L(m|u)=P(m)$ looses its optimality, is $\alpha_0\ge1$.
To see this, notice that adding the utterance cost to the bound in~\eqref{sm-eq:G_bound} gives
\eqarrn{
\Ga[S,L] &\le&(\alpha-1) I_S(M;U) + H_S(U) - \alpha H(M) -\alpha\Ep{S}[C(U)]\\
&=& (\alpha-1) I_S(M;U) - D[S(u)\|\Qa(u)] + \log\Za -\alpha H(M) \label{sm-eq:G_bound_w_cost}
}
where $\Qa(u)$ is the maximum entropy distribution over $\U$ with respect to $C(u)$ defined by
\eq{
\Qa(u) = \frac{e^{-\alpha C(u)}}{\Za}\;, \;\;\;\,\Za=\sum_{u}e^{-\alpha C(u)}\,
}
and $D[\cdot\|\cdot]$ is the Kullback-Leibler (KL) divergence. For $\alpha\le1$, the first two terms in~\eqref{sm-eq:G_bound_w_cost} are non-positive and therefore~\eqref{sm-eq:G_bound_w_cost} can be further bounded from above, yielding
\eqn{
\Ga[S,L] \le \log\Za -\alpha H(M)\,. \label{sm-eq:G_bound_w_cost_U}
}
It is easy to verify that this upper bound is attained by $\Sa(u|m) = Q_{\alpha}(u)$ and $\La(m|u)=P(m)$ (as before, to see this substitute these distribution in $\Ga$). In this case, $\Sa(u|m)$ changes continuously for $\alpha\le\alpha_0$, even though these changes do not convey any information to the listener.
 In other words, in this regime, the RSA model predicts that a pragmatic speaker will not try to convey any information to the listener ($I_S(M;U)=0$), but will rather seek the minimal deviation from random utterance production that reduces the expected utterance cost $\Ep{S}[C(U)]$ to a tolerable degree, determined by $\alpha$. This is another demonstration of RSA's bias toward random utterance production.
Finally, we note that more generally, if the cost function is two-positional, that is $C(m,u)$, then it is possible that $\alpha_0<1$.

\subsection{RD-RSA}

Next, we characterize the asymptotic behavior of RD-RSA in the basic setup discussed in the main text. Denote by $\Fa^*$ the minimal value of $\Fa$ given $\alpha$, and let $\Sa$ and $\La$ be optimal speaker and listener distributions that attain $\Fa^*$.

\begin{Proposition}
\label{thm:basic-rd-rsa-asympt}
Let $C(u)$ be a constant function, then the following statements hold for RD-RSA:
\begin{enumerate}

\item For $\alpha\in [0,1)$,
\eq{
\Sa \in \argmin_S I_S(M;U)
}
\item For $\alpha > 1$, 
\eq{
\Sa \in \argmax_S I_S(M;U)
}
\item For $\alpha=1$, and all stationary points are optimal.
\end{enumerate}
\end{Proposition}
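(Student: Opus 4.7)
The plan is to parallel the strategy of Proposition~\ref{thm:basic-rsa-asympt}: for each fixed $S$, optimize $\Fa[S, L]$ over $L$ to produce a lower bound expressed in terms of $S$ alone, then minimize that bound over $S$ in each regime of $\alpha$. Without loss of generality I assume $C(u) = 0$, since a constant cost only shifts $\Ep{S}[V_L]$ by an additive constant and does not affect the argmin structure, so $V_L(m,u) = \log L(m|u)$.

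For a fixed $S \in \simplex{\U}^{\M}$, let $S(m|u) = S(u|m)P(m)/S(u)$ denote the induced Bayesian posterior. By Gibbs' inequality applied pointwise in $u$, $\Ep{S}[\log L(m|u)] \le \Ep{S}[\log S(m|u)]$, with equality iff $L$ equals this posterior. Expanding the right-hand side via Bayes' rule, analogously to~\eqref{sm-eq:sub-bayes}--\eqref{sm-eq:G_bound}, yields $\Ep{S}[\log S(m|u)] = I_S(M;U) - H(M)$. Substituting back into $\Fa$ gives
\eqn{
\Fa[S,L] \;\ge\; (1-\alpha) I_S(M;U) + \alpha H(M)\,,
}
with equality iff $L$ is Bayesian with respect to $S$. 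Any optimal pair must attain this equality (otherwise replacing $L$ with the Bayesian listener strictly decreases $\Fa$), so the problem reduces to minimizing $(1-\alpha) I_S(M;U)$ over $S$.

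The three regimes now fall out from the sign of $(1-\alpha)$: for $\alpha \in [0,1)$ the coefficient is strictly positive, so $\Sa \in \argmin_S I_S(M;U)$; for $\alpha > 1$ the coefficient is strictly negative, so $\Sa \in \argmax_S I_S(M;U)$; for $\alpha = 1$ the bound collapses to the constant $H(M)$, independent of $S$. The step requiring the most care is the third claim---that \emph{all} stationary points, not merely global minimizers, are optimal at $\alpha = 1$. For this I would invoke Proposition~\ref{thm:basic-rd-rsa-opt}: every stationary point has $L$ Bayesian with respect to $S$, so it automatically lies on the equality set of the lower bound. Substituting $\alpha = 1$ together with this $L$ into~\eqref{sm-eq:S-RD} reduces the speaker condition to $S(u|m) \propto S(u) L(m|u) = S(u|m) P(m)$, whose normalizer over $u$ equals $P(m)$, so the condition is satisfied by \emph{any} $S$ paired with its Bayesian $L$. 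Hence the set of stationary points coincides with the equality set of the bound, and every such pair attains $\Fa^* = H(M)$.
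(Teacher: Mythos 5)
Your proposal is correct and follows essentially the same route as the paper's proof: deriving the lower bound $\Fa[S,L] \ge (1-\alpha) I_S(M;U) + \alpha H(M)$ via Gibbs' inequality and Bayes' rule, reading off the two regimes from the sign of $(1-\alpha)$, and observing that at $\alpha=1$ every stationary point has a Bayesian listener and therefore attains the bound $H(M)$. Your explicit verification that the speaker stationarity condition becomes vacuous at $\alpha=1$ is a slightly more detailed rendering of the same argument the paper gives.
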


\begin{proof}
The idea of the proof is similar to the proof of Proposition~\ref{thm:basic-rsa-asympt}. Here, however, we derive a lower bound for $\Fa$.
For this, we take similar steps as in \eqref{sm-eq:kl-bound}-\eqref{sm-eq:sub-bayes} but adapt them to $\Fa$ by replacing the conditional entropy in the first term by $I_S(M;U)$ and changing the sign of the second term. This gives the lower bound
\eqarrn{
\Fa[S,L] & \ge &  (1-\alpha) I_S(M;U) + \alpha H(M)\,.\label{sm-eq:F_bound}
}
For $\alpha \in [0,1)$, this lower bound is minimal when $I_S(M;U)$ is minimal, i.e. when $I_S(M;U)=0$, which is attained by a non-informative speaker, e.g. $\Sa(u|m)=\frac1{|\U|}$.
For $\alpha>1$, this lower bound is minimal when $I_S(M;U)$ is \emph{maximal}. Therefore, the optimum in this regime is given by $\argmax_S I_S(M;U)$.
If there exists a bijection $\phi:\M\rightarrow\U$ that does not violate the lexicon, then $\Sa(u|m)=\delta_{u, \phi(m)}$ and $\La(m|u)= \Sa(m|u) = \delta_{u, \phi(m)}$ attain this bound.
Finally, for $\alpha=1$, any fixed point $(S^*, L^*)$ of the RD-RSA equations gives
\eqn{
\Ep{S^*}\left[\log L^*(m|u)\right] = \Ep{S^*}\left[\log S^*(m|u)\right] = -H_S^*(M|U)\,,
}
and therefore $\Fa[S^*,L^*] = I_{S^*}(M;U) + H_S^*(M|U)= H(M)$.
This means that all fixed points are equally good in this regime. Note also that the lower bound~\eqref{sm-eq:F_bound} in this case becomes $H(M)$.
\end{proof}

\section{Comparison with human behavior}

In the main text we have shown that both RSA and RD-RSA produce listener distributions that are highly correlated with the empirical human listener estimated from the experimental data of Vogel et al. (2014).  Here we supplement that evaluation with the figure below, which shows that the best RSA listener and the best RD-RSA listener are indeed very similar to each other and to the empirically estimated human listener.

\begin{figure}[h!]
\centering
\includegraphics[width=.5\linewidth]{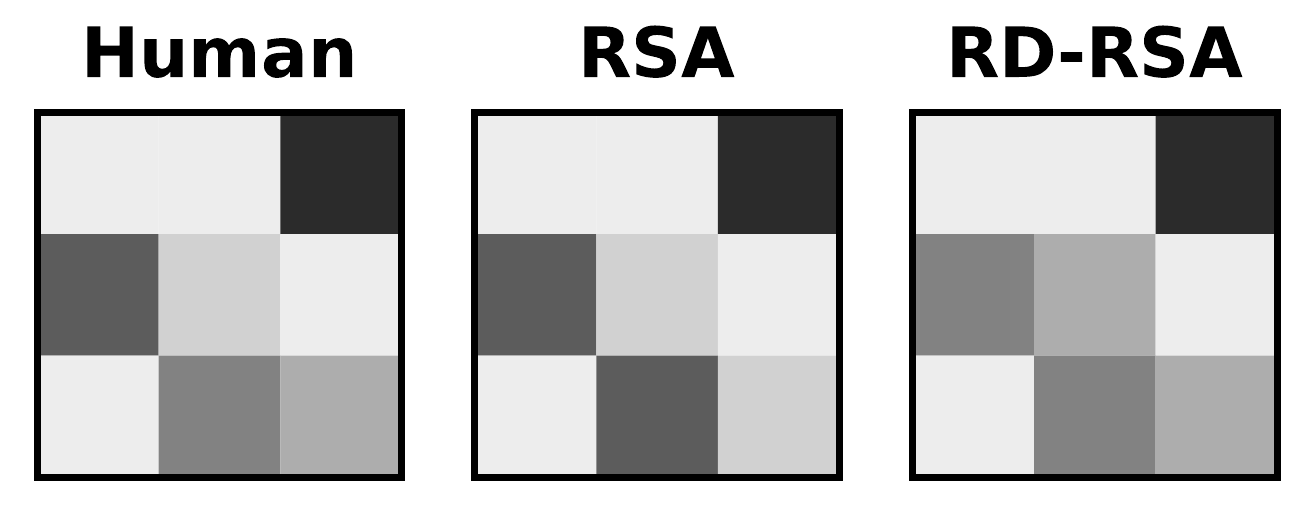}
\caption{Left: Human listener distribution estimated from the data of Vogel et al. (2014). Middle: RSA's listener distribution for $\alpha=0.9$ and recursion depth 1. Right: RD-RSA's listener distribution for $\alpha=1.2$ and recursion depth 5.}
\label{fig:vogel-sm}
\end{figure}

\end{document}